\newtheorem{definition}{Definition}
\newtheorem{theorem}{Theorem}
\newcommand{\R}{\mathbb{R}}
\newcommand{\ben}{\begin{enumerate}}
\newcommand{\een}{\end{enumerate}}
\newcommand{\bo}{\textbf}
\newcommand{\mean}{\textbf{$\mu$}}
\newcommand{\x}{\textbf{x}}
\newcommand{\p}{\textbf{p}}
\newcommand{\aj}{\textbf{a}_j}
\newcommand{\ajhat}{\widehat{\textbf{a}_j}}
\newcommand{\I}{\textbf{I}}
\newcommand{\A}{\textbf{A}}
\newcommand{\e}{\textbf{e}}
\newcommand{\uu}{\textbf{u}}
\newcommand{\C}{\textbf{C}}
\newcommand{\Q}{\textbf{Q}}
\newcommand{\plane}{\textit{P}}
\begin{document}

\title{Principal Direction Gap Partitioning (PDGP)}
\author[1]{Ralph Abbey}
\author[2]{Jeremy Diepenbrock}
\author[3]{Amy Langville}
\author[1]{Carl Meyer}
\author[1]{Shaina Race}
\author[4]{Dexin Zhou}
\affil[1]{North Carolina State University}
\affil[2]{Washington University}
\affil[3]{College of Charleston}
\affil[4]{Bard College}
\maketitle


\section{Introduction}
Data clustering has various applications in a wide variety of fields ranging from social and biological sciences, to business, statistics, information retrieval, machine learning and data mining. Clustering refers to the process of grouping data based only on information found in the data which describes its characteristics and relationships. Although humans are generally very good at discovering patterns and classifying objects, clustering algorithms are able to discern similarities in data even when humans are not  \cite{chap8}. The main focus of our research has been document clustering, but we will demonstrate that our methods also work nicely on scientific data.

In this paper, we propose an adaptation of the clustering algorithm known as Principal Direction Divisive Partitioning (PDDP) developed by Daniel Boley in \cite{BoleyPDDP} which is based Principal Components Analysis (PCA). PCA involves the eigenvector decomposition of a data covariance matrix, or equivalently a singular value decomposition (SVD) of a data matrix after mean centering.  The name of our adaptation, Principal Direction Gap Partitioning (PDGP), borrows most of its name from PDDP as it follows many of the same steps that PDDP follows. The word ``gap'' replaces the word ``divisive'' in reference to how the algorithm splits data along natural gaps at each step. This concept will be further developed in the following sections, but it should be noted that PDGP is still a divisive algorithm in the same way that PDDP is.


\section{Mathematical Notation and Background}
 
 In order to fully understand how and why PDDP works, we will begin with a detailed description of the linear algebra and geometry which support the algorithm.\\

\begin{definition}{The Singular Value Decomposition (SVD)}
For each $\C\in\Re^{m\times n}$ of rank $r$, there are orthogonal matrices
\begin{equation}
 \textbf{U}_{m\times m} = [\uu_1|\uu_2|...|\uu_m]
 \hspace{10pt} \hbox{ and } \hspace{10pt}
 \textbf{V}_{n\times n} = [\textbf{v}_1|\textbf{v}_2|...|\textbf{v}_n]
\nonumber
\end{equation}
and a diagonal matrix $\textbf{D}_{r\times r} = \hbox{diag}(\sigma_1, \sigma_2, ..., \sigma_r)$ such that
\begin{equation}
 \C=\textbf{U}
 \left(\begin{array}{c c} \textbf{D} & \textbf{0} \\ \textbf{0} & \textbf{0} \\
 \end{array}\right)_{m\times n}
 \textbf{V}^T=\sum_{i=1}^r\sigma_i\uu_i\textbf{v}_i^T \hspace{10pt} \hbox{with} \hspace{10pt}
 \sigma_1\geq \sigma_2\geq ...\geq \sigma_r > 0.
\nonumber
\end{equation}
The $\sigma_i$'s are the nonzero singular values of \C, and the respective columns $\uu_j$ and the $\textbf{v}_j$ are the left-hand and right-hand singular vectors for \C. 
\end{definition}

\subsection{Directions and Lines of Principal Trend}
The principal trend in data can be considered in two ways. In principal component analysis (PCA) the direction of principal trend is considered the direction in which the variance (or spread) of the data is maximal \cite{PCA}. Another way to define the principal trend is by means of least squares, in which case the trend is along a line \L\, for which the total sum of squares of orthogonal deviations from \L\, is minimal among all lines in $\R^n$. The concepts of maximal spread and minimal deviations are equivalent in this context. For the sake of subsequent developments, we present the details of this fact below.

For a matrix $\bo{A}_{mxn} = [\bo{a}_1 \vert \bo{a}_2 \vert \dots \vert \bo{a}_n]$ of column data, we define the mean and variance, respectively, as follows:

\begin{eqnarray*}
 \bo{$\mu$} &=& \frac{1}{n} \sum_{i=1}^n \bo{a}_i = \frac{\bo{Ae}}{n} \\
 Var[A] &=& \frac{1}{n} \sum_{i=1}^n \|\bo{a}_i - \bo{$\mu$} \| =\frac{ \| \bo{A} - \bo{$\mu$}\bo{e}^T\|_F^2}{n}\\
            &=& trace \frac{(\bo{A} - \bo{$\mu$}\bo{e}^T)^T (\bo{A} - \bo{$\mu$}\bo{e}^T)}{n}\\
            &=& \frac{\|\bo{A}\|_F^2}{n} - \|\bo{$\mu$}\|_2^2
\end{eqnarray*}

Where \bo{e} is a vector of all ones and $\|*\|_F$ is the Frobenius matrix norm.  We will refer to a centered matrix, $\C=\A-\mean\e^T$, whose mean is zero and variance is$ \frac{\|\C\|_F^2}{n}$.
A trend line $\bo{L}(\bo{x},\bo{p}) = \{ \alpha\x + \p \vert \alpha \in \R\}$ for a data cloud in $\R^m$ is defined by a direction vector $\x \in \R^m$ with $\|\x\|_2 = 1$ and a point $\p \in \R^m$. See Figure \ref{fig:trendline} .

\begin{figure}[ht]
 \centering
 \includegraphics[scale=.75]{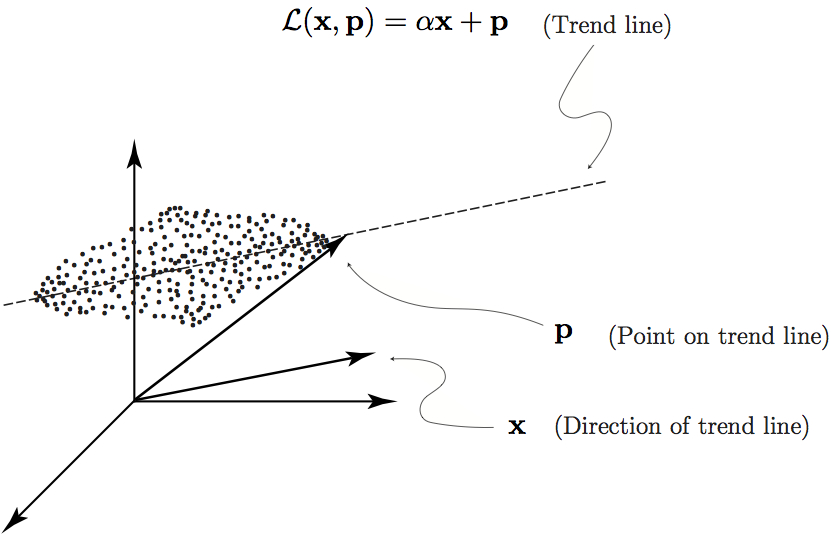}
 \caption{Trend Line}
 \label{fig:trendline}
\end{figure}

\subsubsection{Minimum Deviation Trend Line}
The minimum deviation trend line is the line \L\, for which the total sum of squares of orthogonal deviations between the data and \L\, is minimal among all lines in $\R^m$. To determine \L, let $\ajhat$ denote the orthogonal projection of $\aj$ onto a line \L(\x,\p). This orthogonal projection is given by
\begin{equation}
 \ajhat = \x\x^T(\aj - \p) + \p
\nonumber
\end{equation}
 and thus the difference between $\aj$ and the closest point on \L(\x,\p) is $\aj - \ajhat = (\I-\x\x^T)(\aj -\p)$. Consequently, the minimum deviation trend line is located by finding $\x,\p \in \R^m$ that solves the minimization problem 
\begin{equation}
 \displaystyle \min_{\x,\p,\|\x\|_2=1} f(\x,\p)
\nonumber
\end{equation}
where the objective function is 
\begin{equation}
 f(\x,\p) = \sum_{j=1}^n \| \aj - \ajhat \|_2^2 = \|(\I-\x\x^T)(\A -\p\e^T)\|_F^2
\nonumber
\end{equation}
The following theorem precisely characterizes the minimum deviation trend line.
 
\begin{theorem}[Minimum Deviation Trend Line] The minimum deviation trend line for the column data in \A\,  is given by
\begin{equation}
 \mbox{\L} = \{ \alpha \uu_1(C) + \mean \vert \alpha \in \R\}
\nonumber
\end{equation}
where $\uu_1(C)$ is the principal left-hand singular vector of the centered matrix
\begin{equation}
 \C=\A-\mean\e^T = \A(\I-\e\e^T/n)
\nonumber
\end{equation}
\end{theorem}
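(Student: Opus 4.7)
The plan is to minimize $f(\x,\p)$ in two stages: first eliminate $\p$ for a fixed direction $\x$, then optimize over the unit vector $\x$. Throughout, I will exploit that $\x\x^T$ and $\I-\x\x^T$ are complementary orthogonal projectors, together with the Rayleigh-quotient characterization of the leading left singular vector.

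\textbf{Stage 1 (optimize over $\p$).} Writing $f(\x,\p)=\operatorname{tr}\bigl[(\A-\p\e^T)^T(\I-\x\x^T)(\A-\p\e^T)\bigr]$ and differentiating with respect to $\p$, the stationarity condition is
\begin{equation}
(\I-\x\x^T)(\A-\p\e^T)\e \;=\; \bo{0},\nonumber
\end{equation}
which, using $\A\e/n=\mean$, becomes $(\I-\x\x^T)(\mean-\p)=\bo{0}$. Hence $\mean-\p$ must be a scalar multiple of $\x$; in particular $\p=\mean$ is optimal. Geometrically this is expected, since replacing $\p$ by $\p+\beta\x$ produces the same line $\L(\x,\p)$, so the family of optimal $\p$'s collapses to a single line passing through $\mean$. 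This reduces the problem to
\begin{equation}
\min_{\|\x\|_2=1}\; g(\x) \;=\; \|(\I-\x\x^T)\C\|_F^2,\qquad \C=\A-\mean\e^T.\nonumber
\end{equation}

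\textbf{Stage 2 (optimize over $\x$).} Since $\x\x^T$ and $\I-\x\x^T$ are complementary orthogonal projectors, the images $\x\x^T\C$ and $(\I-\x\x^T)\C$ are columnwise orthogonal, so Pythagoras (applied columnwise and summed) gives
\begin{equation}
\|(\I-\x\x^T)\C\|_F^2 \;=\; \|\C\|_F^2 \,-\, \|\x\x^T\C\|_F^2 \;=\; \|\C\|_F^2 \,-\, \x^T\C\C^T\x,\nonumber
\end{equation}
where the last equality uses $\|\x\x^T\C\|_F^2=\operatorname{tr}(\C^T\x\x^T\x\x^T\C)=\x^T\C\C^T\x$ with $\x^T\x=1$. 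Minimizing $g$ is therefore equivalent to maximizing the Rayleigh quotient $\x^T\C\C^T\x$ over unit $\x$. By the SVD of $\C$, the eigenvectors of $\C\C^T$ are the left singular vectors $\uu_i(\C)$ with eigenvalues $\sigma_i^2$; the maximum is attained at $\x=\uu_1(\C)$, giving the claimed trend line $\L=\{\alpha\uu_1(\C)+\mean\mid\alpha\in\R\}$.

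\textbf{Anticipated obstacle.} The calculation itself is short; the only subtlety is the non-uniqueness of the optimal $\p$ at Stage 1, which I would handle explicitly by observing that all optimal $\p$'s lie on a single line and that choosing $\p=\mean$ is the canonical representative that makes the centered matrix $\C$ appear. A secondary bookkeeping point is the Pythagorean identity $\|\C\|_F^2=\|\x\x^T\C\|_F^2+\|(\I-\x\x^T)\C\|_F^2$, which must be justified columnwise rather than by waving at ``orthogonal projections'' on $\R^{m\times n}$.
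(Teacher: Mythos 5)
Your proposal is correct and follows essentially the same route as the paper: stationarity in $\p$ forces $\p-\mean$ to be a multiple of $\x$ (so the line passes through $\mean$), and then the reduction $\|(\I-\x\x^T)\C\|_F^2=\|\C\|_F^2-\|\C^T\x\|_2^2$ turns the problem into maximizing the Rayleigh quotient, attained at $\uu_1(\C)$. The only cosmetic differences are that you state the $\p$-stationarity condition in matrix form and justify the Frobenius identity via a columnwise Pythagorean argument, whereas the paper expands the trace explicitly; the substance is identical.
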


\begin{proof}
Apply straightfoward differerentiation to the function f(\x,\p), and begin by looking for points \p\,  that satisfy 
\begin{equation}
 0= \frac{\partial{f}}{\partial{\p}} = (\partial{f}/\partial{p_1}, \dots, \partial{f}/\partial{p_m})^T
\nonumber
\end{equation}
Letting $\Q = \I - \x\x^T$ and using $\Q^2 = \Q = \Q^T$ (since $\|\x\| = 1$) yields
\begin{eqnarray*}
 f(\x,\p) &=& trace([\A^T-\e\p^T]\Q[\A -\p\e^T])\\
            &=& trace(\A^T\Q\A) - 2trace(\A^T\Q\p\e^T) + trace (\e\p^T\Q\p\e^T) \\
            &=& trace(\A^T\Q\A) -2n\p^T\Q\mean + n\p^T\Q\p\\
\end{eqnarray*}
so that 
\begin{equation}
 \frac{\partial{f}}{\partial{\p}} = -2n\Q\mean + 2n\Q\p
\nonumber
\end{equation}
consequently, 
\begin{equation}
 \frac{\partial{f}}{\partial{\p}} = 0 \Longrightarrow \Q(\p-\mean) = 0
\nonumber
\end{equation}
and thus, $\p=\alpha\x+\mean$ where $\alpha = \x^T(\mean-\p)$. In other words, regardless of what \x\, turns out to be, a minimizing point $\p$  necessarily lies on the line \L(\x,\mean). Thus, to find the direction vector, \x, which minimizes f(\x,\mean), observe that
\begin{equation}
 f(\x,\mean)=\|(\I-\x\x^T)(\A-\mean\e^T)\|_F^2 = \|(\I-\x\x^T)\C\|_F^2 = \|\C\|_F^2 - \|C^T\x\|_2^2
\nonumber
\end{equation}
so the minimum of f(\x,\mean) is obtained precisely at points where $\displaystyle \max_{\|x\|_2=1} \|\C^T\x\|_2^2$ is obtained. It is well known that 
\begin{equation}
 \max_{\|\x\|_2=1} \|\C^T\x\|_2^2 = \|\C^T\|_2^2 = \sigma_1^2(\C)
\nonumber
\end{equation}
occurs at $\x = \uu_1(\C)$, and thus the minimum deviation (or total least squares) trend line is 
\begin{equation} 
 \mbox{\L}=\{\alpha\uu_1(\C) + \mean \vert \alpha \in \R\}
\nonumber
\end{equation}
\end{proof}


\subsubsection{Maximum Variance Trend Line}
Another natural way to gauge the principal trend of the data is to locate the line $\mbox{\L} \in \R^m$ along which the data is most spread-i.e., the line along which the variance is maximal. Since the orthogonal projection of $\aj$ onto any line \L(\x,\p) is $\ajhat = \x\x^T(\aj - \p) + \p$, The directional variance along \L(\x,\p) is 
\begin{equation}
 Var[\widehat{\A}] = \frac{\|\widehat{\A} - \mean_{\widehat{\A}}\e^T\|_F^2}{n} , \mbox{   where     } \widehat{\A} =
 (\I-\x\x^T)\p\e^T + \x\x^T\A
\nonumber
\end{equation}
Since $\mean_{\widehat{\A}} = \widehat{\A}\e/n = (\I - \x\x^T)\p + \x\x^T\mean$, it follows that 
\begin{equation}
 \widehat{\A} - \mean_{\widehat{\A}}\e^T = \x\x^T(\A - \mean\e^T) = \x\x^T\C
\nonumber
\end{equation}
and thus, 
\begin{equation}
 Var[\widehat{\A}]= \frac{\|\x\x^T\C\|_F^2}{n} = \frac{\|\C^T\x\|_2^2}{n}.
\nonumber
\end{equation}
So by the same reasoning above, the direction vector \x\, that maximizes the directional variance $Var[\widehat{\A}]$ is also
$\uu_1(\C)$ 

\begin{definition}[The Principal Trend Line]
The \textbf{principal trend line} for the column data in \A\, is defined to be 
\begin{equation}
 \mbox{\L}=\{\alpha\uu_1 + \mean \vert \alpha \in \R\}
\nonumber
\end{equation}
and it represents both the line of minimal total deviations as well as the line of maximal variance. \textbf{Note:} Unless otherwise stated, it is hereafter understood that $\uu_1 = \uu_1(\C)$ is the principal left-hand singular vector of the centered matrix $\C=\A-\mean\e^T = \A(\I-\e\e^T/n)$
\end{definition}


\subsubsection{Principal Partitions}
The first step in making principal partitions is to divide the data into two disjoint sets by slicing it with an affine hyperplane $\plane = \uu_1^\perp + \mean$ that is orthogonal to the principal trend line 
$\mbox{\L} =\alpha\uu_1 + \mean$. As depicted in Figure \ref{fig:affinehyperplane} it is natural to put the points that are on one side of \plane\, (say the points ``in front'' of \plane, as depicted in the figure) into one group and to put points on the other side (the points ``behind'' \plane) into another group.

\begin{figure}[ht]
 \centering
 \includegraphics[scale=.75]{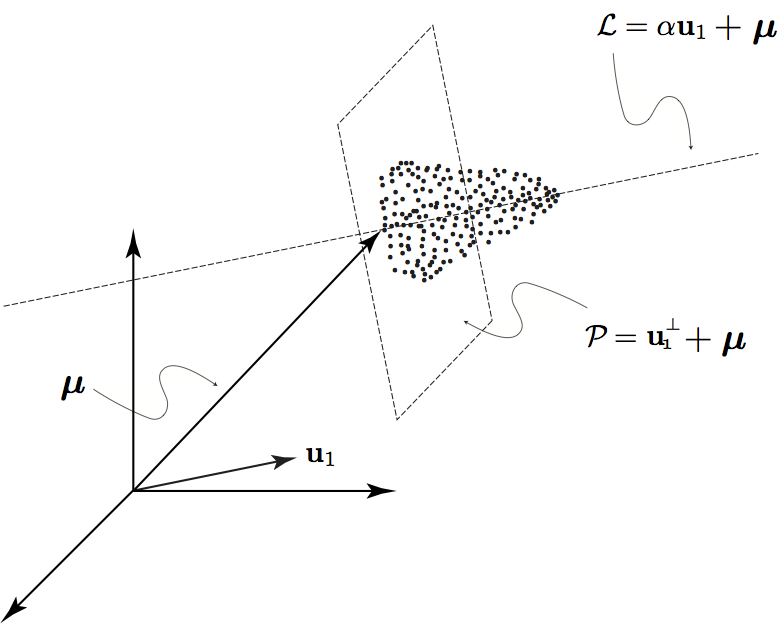}
 \caption{Data Cloud Partitioned by Affine Hyperplane}
 \label{fig:affinehyperplane}
\end{figure}

The distinction between ``front'' and ``back'' is simply made by determining whether the projection $\widehat{\aj}  = \uu_1\uu_1^T(\aj - \mean)+\mean$ of a data point $\aj$ onto the principal trend line $\mbox{\L}=\alpha\uu_1+\mean$ lies to one side of \mean\, or the other. Since $\widehat{\aj} - \mean = \alpha_j\uu_1$ for some $\alpha_j$, the sign of $\alpha_j$ determines the side of \plane\, that $\widehat{\aj}$ and $\aj$ are on. Since $\alpha_j = \uu_1^T(\aj -\mean)$ and since $\aj-\mean = \textbf{c}_j$ is the $j^{th}$ column of the centered matrix \C, it follows that
\begin{equation}
 [\alpha_1, \alpha_2, \dots, \alpha_n] = [\uu_1^T \textbf{c}_1, \uu_1^T\textbf{c}_2, \dots, \uu_1^T\textbf{c}_n] = \uu_1^T\C =
 \sigma_1\textbf{v}_1^T
\nonumber
\end{equation}
where $\textbf{v}_1$ is the principal right-hand singular vector of \C\, that is associated with the largest singular value, $\sigma_1$. The fortunate aspect of this observation is that once the SVD of \C\, has been computed, the vector $\sigma_1\textbf{v}_1^T$ is immediately available. Furthermore, since only the signs of the components in $\uu_1^T\C$ are needed to determine to which side of \plane\, the respective columns in \A\, lie, and since $\sigma_1 > 0$, it is evident that the principal partition is determined simply by inspecting the signs of the entries in $\textbf{v}_1.$
 
\begin{definition}[The Principal Partition]
The \textbf{principal partition} of the column data in \A\, is determined by the signs of the entries in the principal right-hand singular vector, $\textbf{v}_1$ of the centered matrix \C. Columns in \A\, corresponding to positive signs in $\textbf{v}_1$ are placed in one cluster while columns corresponding to negative signs are placed in another cluster. A column associated with a zero entry in $\textbf{v}_1$ may be arbitrarily assigned to either cluster.
\end{definition}

\section{Principal Direction Divisive Partitioning}
Once the principal partition of the data has been made, there are several options for making further partitions. One such option is the \textit{principal direction divisive partitioning} (PDDP) scheme proposed by Daniel Boley \cite{BoleyPDDP}. This algorithm suggests we make the principal partition and then examine both clusters to determine which has the maximal variance, or scatter. This cluster of maximal variance is then repartitioned across its own principal trend line, separating the data into a total of three disjoint sets, and the process continues by repartitioning the cluster of maximal variance each time, producing any desired number of disjoint (hard) clusters. At each step of PDDP the projected data is split by a principal partition.
\section{Principal Direction Gap Partitioning}
Principal Direction Gap Partitioning (PDGP) is our adaptation of PDDP which takes into account natural gaps which identify clusters in the data. We will motivate our algorithm with some the discussion of some geometrical scenarios in which PDDP breaks down.

\subsection{Motivation}
The technique of clustering the column data in \A\, by means of principal partitions is appealing because it is easily implemented by simply inspecting the signs of the principal right-hand singular vector of \C. However, superior results can often be obtained if we are willing to compromise this simplicity slightly to look for natural gaps in the data. For example, suppose that the data naturally clusters into three distinct gaps as shown in Figure \ref{fig:threeclusters}.

\begin{figure}[h]
 \centering
 \includegraphics[scale=.75]{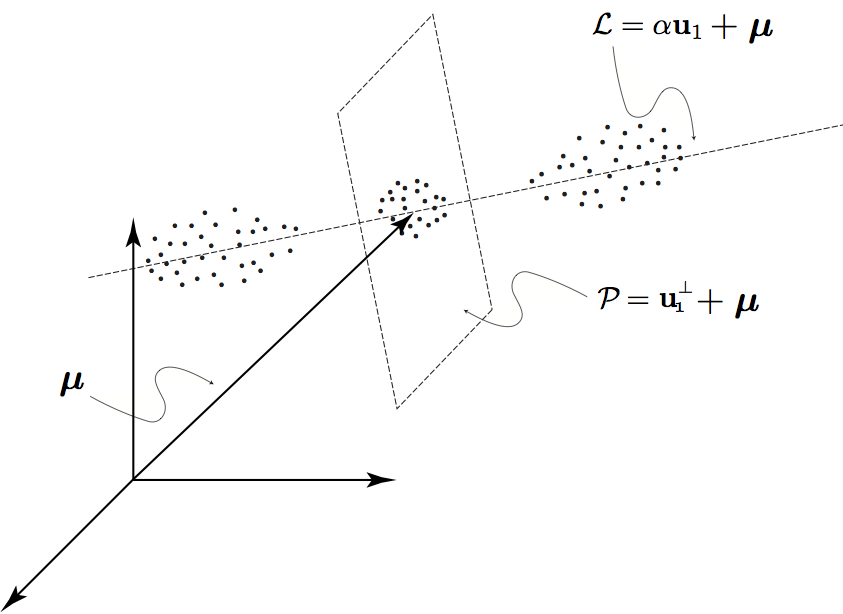}
 \caption{Three Data Clouds along Principal Direction}
 \label{fig:threeclusters}
\end{figure}

If this data is partitioned by \plane\, using the signs of $\textbf{v}_1$, then the middle cluster is unnaturally sliced into two pieces. It seems more reasonable to shift \plane\, and partition the data with an affine hyperplane $(\alpha\uu_1+\mean) + \uu_1^\perp$ as shown in Figure \ref{fig:threeclustersplane} where $\alpha$ is chosen to put the shifted hyperplane into the largest gap in the data.

\begin{figure}[ht]
 \centering
 \includegraphics[scale=.75]{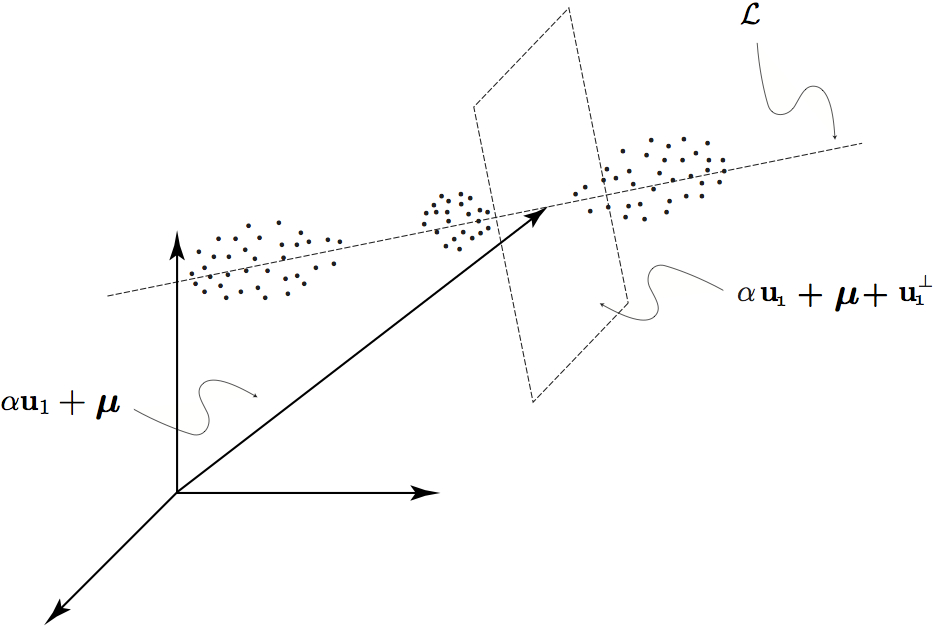}
 \caption{Partition by a Shifted Affine Hyperplane}
 \label{fig:threeclustersplane}
\end{figure}
  
Gaps between clusters are easily detected by projecting the columns of \A\, onto the principal trend line and measuring the gaps between adjacent points. 


As admitted in \cite{BoleyPDDP}, the choice of splitting the projected data at zero is somewhat arbitrary because it is based on the assumption that the mean of the data will naturally fall in between two well separated clusters. It is easy to see when this assumption might fail, for example in the case of unbalanced cluster sizes. Figures \ref{fig:dotplots1} and \ref{fig:dotplots2} show two real world examples in which this assumption fails. In these two graphs the entries in the principal right-hand singular vector, $\textbf{v}_1$,  are plotted in increasing order. The black line depicts the split that PDDP will make. The PDGP algorithm splits at the gap that naturally clusters the data.
\begin{figure}[ht]
 \centering
 \includegraphics[scale=.75]{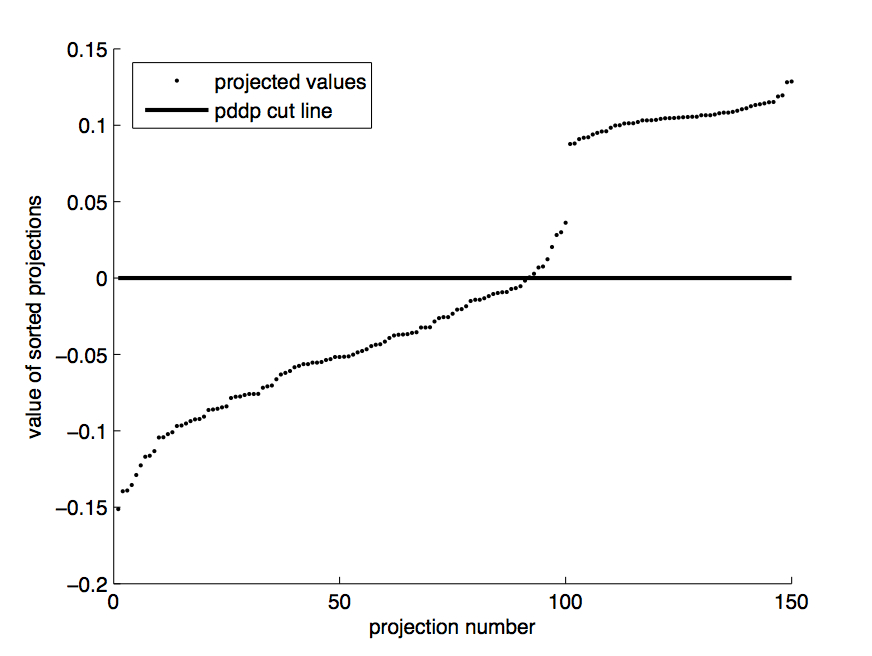}
 \caption{Principal Partition of projected data points }
 \label{fig:dotplots1}
\end{figure}

\begin{figure}[ht] 
 \centering
 \includegraphics[scale=.75]{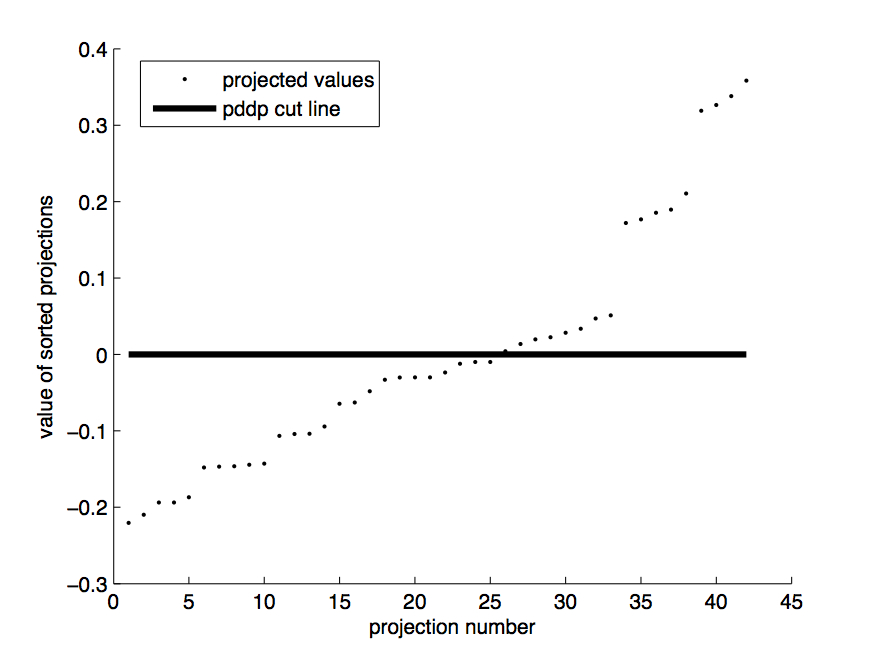}
 \caption{Principal Partition of projected data points}
 \label{fig:dotplots2}
\end{figure}

Sometimes the division made by PDDP and PDGP coincide. This indicates a situation when the assumption that two clusters are separated by the mean holds true. Figure \ref{fig:dotplots3}

\begin{figure}[ht]
 \centering
 \includegraphics[scale=.75]{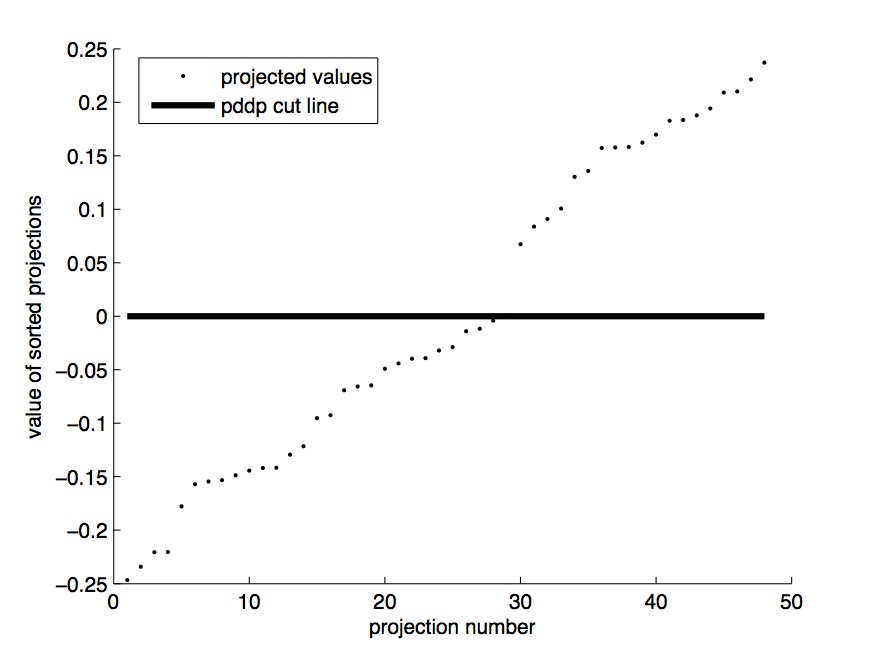}
 \caption{PDDP and PDGP divisions coincide}
 \label{fig:dotplots3}
\end{figure}

\subsection{Description of the Algorithm}
The PDGP algorithm is identical to the PDDP algorithm aside from where the data is split at each step. After the data is projected onto the principal trend line, PDDP splits the data at 0 while PDGP splits the data at the largest gap between the points. To further clarify this, we propose the following definition.
\begin{definition}[Gap Partition]
Sort the components of the first right-hand singular vector, $\textbf{v}=\textbf{v}_1$ in ascending order and label the sorted vector \textbf{s}. Let $p$ be the permutation required for this sort, i.e. $s_1 =v_{p_1} \leq v_{p_2} =s_2 \leq \dots \leq v_{p_n} = s_n$. If the maximum value of $\textbf{s}$ occurs at $\textbf{s}_k$ then the \textbf{gap partition} of \textbf{v}, which provides the indices of the column vectors that should be placed in the respective cluster, is:
$$\Pi = \begin{cases} \pi_1 = [p_1 , \dots, p_k] \\
				  \pi_2 = [p_{k+1}, \dots, p_n]
				  \end{cases}$$
\end{definition}

\subsubsection{Fringe Effect and Fringe Tolerance}
One obstacle in the implementation of this algorithm is something we call the \textit{fringe effect}. This is where the gap in a vector $\textbf{v}_1$ occurs very close to the ends of \textbf{s}. These ``fringe gaps", if taken into account, would separate the data into severely unbalanced clusters, one containing almost all of the data and the second containing a mere few. Because the fringe points are often depict outliers or noise, this issue must be addressed. \\
For an example, see Figure \ref{fig:fringe}. Notice on either end of the the outlying data points that create large gaps. The human eye is likely to find 3 or 5 clusters in this image, depending on whether you decide the first and last points belong to their own cluster or not. Since one of our goals is to find relatively balanced clusters, the ideal split appears to be between 18 and 19 or between 25 and 26. The line shown is how PDDP would split the data into clusters.

\begin{figure}[ht]
 \centering
 \includegraphics[scale=.75]{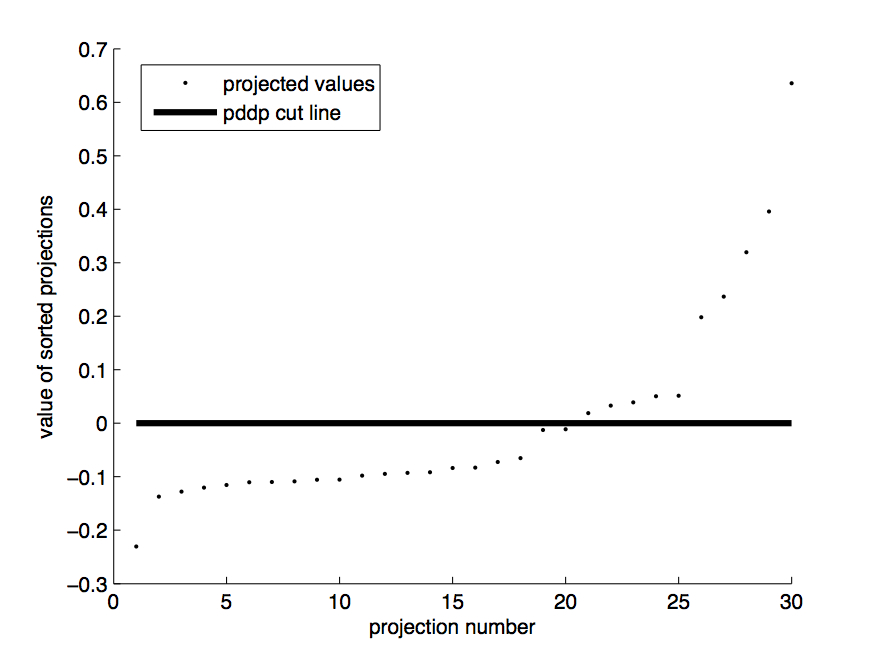}
 \caption{"Fringe values"}
 \label{fig:fringe}
\end{figure}

To counteract this phenomenon we created a ``fringe tolerance", $\tau$, to control the balance of cluster sizes.  We ignore a percentage of the projected data points at each end of the graph.  For our experiments we have ignored a total of 20 percent ($\tau=.2$), or 10 percent from each end. In choosing this particular value, we are insisting that the algorithm not separate the number of data points in a cluster into any ratio larger than 9:1. The fringe tolerance can be changed as the data set changes. Intuitively for smaller data sets the fringe tolerance should be higher, and for larger data sets it should be smaller, especially for a large number of clusters, as a lower percent still encompasses many data points.\\
The PDGP algorithm is identical to the PDDP algorithm aside from where the data is split at each step. After the data is projected onto the principal trend line, PDDP splits the data at 0 while PDGP splits the data at the largest gap between the points. \\
\begin{framed}
\textbf{PDGP Algorithm} 
\begin{itemize}

 \item[1.] Input: Data matrix A, desired number of clusters
 \item[2.] Determine the cluster of maximal variance, use these data vectors to form a matrix \textbf{M}. To begin with, this will be your entire data collection \textbf{A}.
\item[3.] Calculate the SVD of the centered matrix $\C=\textbf{M} - \mean \e^T $
\item[4.] Calculate the gap partition of the first right-hand singular vector, ignoring the proportion of data $\tau/2$ at the beginning and end of the sorted vector, \textbf{s}.
\item[5.] Repeat steps 2-4 until $k$ clusters are created.
\end{itemize}
\end{framed}

\section{Document Clustering}
The main focus of our research has been in the realm of document clustering. Data taken from a group of text documents is traditionally stored in an m$\times$n \textit{term-by-document matrix} where the m rows correspond to the various terms extracted from the documents and the n columns correspond to individual documents. The terms extracted from the document list are filtered through a``stoplist" of common words to remove terms like ``is,'' ``the,'' and ``however.''  The $A_{ij}$ entry of this matrix is the number of times term i occurs in document j.
\subsubsection{Term Weighting}
In the field of text-mining, the raw term-frequences in the term-document matrix, $\A$, are generally weighted in an effort to downplay the effects of commonly used words and bolster the effect of rare but semantically important words. In another approach the columns of $\A$ can be normalized so that lengthy documents do not overshadow their terse counterparts. In this paper we use TFIDF (term frequency - inverse document frequency) weighting or normalization scaling to pre-process our text data prior to clustering. For comparison to the PDDP algorithm we use the variant of TFIDF given in \cite{boleyTFIDF} as used in \cite{BoleyPDDP}. This variant of TFIDF is as follows:

\begin{definition}{Term Frequency - Inverse Document Frequency (TFIDF) weighting}
The Term Frequency - Inverse Document Frequency (TFIDF) weighting for a matrix A is \cite{boleyTFIDF},
\begin{equation}
  \widehat{\textbf{a}}_{ij}=\frac{1}{2}(1+\frac{\textbf{a}_{ij}}{\max_k(\textbf{a}_{kj})})*(\log_2(\frac{n}{\mbox{number of documents containing term i}})
\nonumber
\end{equation}
\end{definition}

For normalization scaling, each document is normalized to have unit Euclidean length:
\begin{equation}
 \widehat{\textbf{a}_{ij}}=\frac{a_{ij}}{\sqrt{\sum_k{a^2_{kj}}}}
\nonumber
\end{equation}

This can alternatively be thought of normalizing each document vector
\begin{equation}
 \widehat{d}_j=\frac{\mbox{d}_j}{\|d_j\|_2}
\nonumber
\end{equation}

It should be noted however, that in later sections of this paper we discuss the use of scientific data as well as the use of textual data. The rationale for term weighting in scientific data no longer applies because there are not documents of different length to contend with. Although normalization is commonly used in scientific data, it is unnecessary when the values of a variable are physically constrained to stay in a reasonable range

\subsection{Cluster Evaluation}
Cluster evaluation or  validation is an important aspect of any cluster related research. Since most existing clustering algorithms will determine clusters in data whether or not they exist naturally, it is important to have some way to evaluate the accuracy of clustering results. Cluster evaluation measures are typically broken into two catagories, internal (or unsupervised), and external (or supervised). Internal measures use no outside information, such as class or catagory labels, to determine the validity of the clustering. Internal measures are typically measures of cluster cohesion and separation. Cluster cohesion gives us an idea of how dense an individual cluster is while cluster separation tells us how distinct or separated the clusters are from each other.  The most commonly used internal measure is the silhouette coefficient, which combines both cohesion and separation. Since internal measures do not tell us explicitly about the accuracy of our clustering results, we do not use them in this paper. \cite{chap8}.

External measures use information not included in the dataset (such as class labels) to determine how well the algorithm clustered the data into its pre-determined clusters. External measures are not useful in practice because there is no need to cluster data which is already catagorically assigned, but they give us a more accurate metric for comparing different clustering algorithms. The most common external measure for cluster evaluation is entropy and is described in detail at the end of the paper. A smaller entropy value indicates a higher quality clustering. It is important to note that we have used normalized entropy so that the values fall between 0 and 1. For this reason, our entropy values for PDDP run on the same data sets as Boley's original experiments in \cite{BoleyPDDP} will differ by the multiplicative constant $\log_2(k)$ where k is the actual number of clusters.

\section{Description of Data Sets and Experimental Results}
Experiments comparing PDDP and PDGP were performed on a series of data sets. We chose not only document data sets, but also scientific data to compare the clustering algorithms.

\subsection{J Document Sets}
This document set was used in the original paper on PDDP\cite{BoleyPDDP}, and consists of 185 documents taken from the world wide web. A stop list of common words was applied, and also a stemmer to handle verb tenses, plurals, etc. By counting the rest of the words the resulting matrix was called J1. Further modifcations were made resulting in J2-J11 as seen in table 3 in \cite{BoleyPDDP}.

In running the two algorithms a table similar to table 5 in \cite{BoleyPDDP} was created. It is important to note that \textbf{normalized entropy} (see section on entropy) was used so that the values would range between 0 and 1. To produce tables similar to those in \cite{BoleyPDDP} simply scale these tables by a factor of $log_2(10)$ because it was predetermined that the data had 10 clusters. Smaller entropy values indicate a better clustering.

\begin{table}[ht]
\caption{Normalized entropy values obtained by PDDP and PDGP on the J document sets using norm scaling}
\begin{center} \footnotesize
\begin{tabular}[ht]{|c|ccc|ccc@{}c|}
\hline
data & \multicolumn{3}{c}{PDDP} & \multicolumn{3}{c}{PDGP} & \\
\cline{2-8}
sets & \multicolumn{3}{c}{norm scaling} & \multicolumn{3}{c}{norm scaling} & \\
\hline
clusters & 8 & 16 & 32 & 8 & 16 & 32 & \\
\hline
J1 & 0.372 & 0.208 & 0.154  & 0.388 & 0.191 & 0.147  & \\
J6 & 0.399 & 0.250 & 0.169  & 0.398 & 0.232 & 0.157  & \\
J4 & 0.459 & 0.331 & 0.213  & 0.508 & 0.319 & 0.215  & \\
J3 & 0.399 & 0.256 & 0.183  & 0.388 & 0.232 & 0.154  & \\
J7 & 0.408 & 0.270 & 0.182  & 0.393 & 0.220 & 0.158  & \\
J8 & 0.442 & 0.288 & 0.207  & 0.400 & 0.230 & 0.173  & \\
J2 & 0.510 & 0.337 & 0.229  & 0.449 & 0.302 & 0.214  & \\
J9 & 0.496 & 0.322 & 0.228  & 0.507 & 0.327 & 0.213  & \\
J10 & 0.507 & 0.351 & 0.257  & 0.523 & 0.381 & 0.225  & \\
J5 & 0.395 & 0.221 & 0.155  & 0.375 & 0.200 & 0.155  & \\
J11 & 0.443 & 0.315 & 0.202  & 0.510 & 0.343 & 0.217  & \\
\hline
\end{tabular}
\end{center}
\label{normscale}
\end{table}

\begin{table}[ht]
\caption{Normalized entropy values obtained by PDDP and PDGP on the J document sets using TFIDF term weighting}

\begin{center} \footnotesize
\begin{tabular}[ht]{|c|ccc|ccc@{}c|}
\hline
data & \multicolumn{3}{c}{PDDP} & \multicolumn{3}{c}{PDGP} & \\
\cline{2-8}
sets & \multicolumn{3}{c}{TFIDF} & \multicolumn{3}{c}{TFIDF} & \\
\hline
clusters & 8 & 16 & 32 & 8 & 16 & 32 & \\
\hline
J1 & 0.440 & 0.318 & 0.214 & 0.551 & 0.435 & 0.344 & \\
J6 & 0.353 & 0.232 & 0.194 & 0.546 & 0.401 & 0.272 & \\
J4 & 0.496 & 0.356 & 0.281 & 0.481 & 0.379 & 0.292 & \\
J3 & 0.472 & 0.335 & 0.278 & 0.487 & 0.350 & 0.263 & \\
J7 & 0.384 & 0.275 & 0.217 & 0.536 & 0.401 & 0.274 & \\
J8 & 0.398 & 0.272 & 0.230 & 0.491 & 0.361 & 0.300 & \\
J2 & 0.469 & 0.343 & 0.242 & 0.512 & 0.416 & 0.312 & \\
J9 & 0.428 & 0.308 & 0.228 & 0.472 & 0.311 & 0.221 & \\
J10 & 0.571 & 0.374 & 0.296 & 0.485 & 0.349 & 0.275 & \\
J5 & 0.322 & 0.184 & 0.138 & 0.439 & 0.263 & 0.170 & \\
J11 & 0.506 & 0.358 & 0.277 & 0.467 & 0.299 & 0.228 & \\
\hline
\end{tabular}
\end{center}
\label{tfidfscale}
\end{table}

It is apparent from the results in table \ref{normscale} that PDGP is competitive with PDDP when using the norm scaling, and frequently provides a clustering with lower entropy. When TFIDF weighting (Table \ref{tfidfscale} is used, PDDP performs slightly better than PDGP. However, it is experimentally evident that norm scaling provides better clustering results overall when compared to TFIDF weighting, and thus norm scaling should probably be used in favor of the TFIDF weighting for either of these two clustering algorithms.


\subsection{Abalone Data Set}
This data set was obtained from \cite{DataSets} and contains measurements of 4177 different abalone. There were 8 characteristics measured: sex (male, female, infant), length, diameter, height, whole weight, shucked weight, viscera weight, and shell weight. The sex variable was assigned to be 0 for a male, 1 for a female, and 2 for an infant. These measurements were paired with ages, of which 28 different ages were determined. We omitted the age variable from the dataset and aimed to cluster the organisms based upon this variable. However, there were several age groups containing only a few abalone (specifically the older age groups), and thus the sizes of the clusters are expected to be unbalanced. We used both PDDP and PDGP to cluster the data with various numbers of clusters. No scaling or normalization was applied to the data set because the values of each variable are expected to fall within a natural range.

\begin{table}[ht]
\caption{Normalized entropy values obtained PDDP and PDGP on the Abalone scientific data}

\begin{center} \footnotesize
\begin{tabular}[ht]{|c|ccccccccc|}
\hline
clusters sought & 20 & 21 & 22 & 23 & 24 & 25 & 26 & 27 & 28 \\
\cline{2-10}
& \multicolumn{8}{c}{Entropies} & \\
\hline
PDDP & 0.624 & 0.622 & 0.622 & 0.622 & 0.622 & 0.620 & 0.620 & 0.620 & 0.618 \\
PDGP & 0.622 & 0.620 & 0.618 & 0.618 & 0.616 & 0.616 & 0.616 & 0.614 & 0.614 \\
\hline
\end{tabular}
\end{center}
\label{abalone}
\end{table}






\subsection{Iris Data Set}
This data set was obtained from \cite{DataSets} and contains information on 150 flowers. Each flower was measured with four characteristics: sepal length, sepal width, petal length, and petal width. Of these flowers there are 3 different species, so the overall data was stored as a $4\times 150$ matrix. Again, no scaling or normalization was used. PDDP and PDGP were set to run to find 3 clusters.

\begin{table}[ht]
\caption{Normalized entropy values obtained by PDDP and PDGP on the Iris data}
\begin{center} \footnotesize
\begin{tabular}[ht]{|c|ccc|ccc@{}c|}
\hline
& \multicolumn{3}{c}{PDDP} & \multicolumn{3}{c}{PDGP} & \\
\hline
Iris Species & Cluster 1 & Cluster 2 & Cluster 3 & Cluster 1 & Cluster 2 & Cluster 3 & \\
\hline
\# of Setosa &50 & 0& 0& 50& 0& 0 & \\
\# of Versicolour & 9& 38& 3& 0& 50& 0 & \\
\# of Virginica & 0& 14& 36& 0& 34& 16 & \\
\hline
Total Entropy & & 0.404 & & & 0.347 & & \\
\hline
\end{tabular}
\end{center}
\label{iris}
\end{table}

\subsection{Reuters-10 Document data}
This collection of documents, downloaded from \cite{DataSets}, is a subset of the Reuters collection consisting of 20 documents pulled from each of 10 keyword searches for a total of 200 documents. The files were read out by 3 Indian speakers and an Automatic Speech Recognition (ASR) system was used to generate the transcripts. This dataset was collected to study the effect of speech recognition noise on text mining algorithms. Normalization scaling was used. In this noisy data set, PDDP provides a slightly lower entropy than PDGP.

\begin{table}[ht]
\caption{Normalized entropy values obtained by PDDP and PDGP on Reuters-10}

\begin{center} \footnotesize
\begin{tabular}[ht]{|c|c|}
\hline
& Entropy  \\
\hline
PDDP & .6021 \\
PDGP &  .6385 \\
\hline
\end{tabular}
\end{center}
\label{reuters}
\end{table}


\subsection{Wisconsin Breast Cancer Data Set (Original)}

This data set, also downloaded from \cite{DataSets} consisted of 699 observations of individuals with abnormal breast tissue growth obtained from the University of Wisconsin Hospitals, Madison by Dr. William H. Wolberg \cite{BC}. Each observation consists of 9 measurements such as clump thickness, uniformity of cell size and shape. A variable indicating whether a growth was benign or malignant was included in the data so we removed it and clustered the observations into two groups hoping to predict this variable through clustering. There were 16 missing values in the data which were set to 0. No normalization or scaling was applied. Both PDDP and PDGP performed well on this task, though PDGP was slightly superior. 

\begin{table}[ht]
\caption{Normalized entropy values obtained by PDDP and PDGP on Wisconsin Breast Cancer Data}

\begin{center} \footnotesize
\begin{tabular}[ht]{|c|c|}
\hline
& Entropy \\
\hline
PDDP & .0052 \\
PDGP &  .0029 \\
\hline
\end{tabular}
\end{center}
\label{breast}
\end{table}

\section{Conclusion}
PDDP/PDGP are both SVD based clustering algorithms which seek to use the principal trends in a given data set to separate related observations/documents into clusters. Where PDDP arbitrarily makes this split along the principal direction at the mean, PDGP looks for natural gaps in the data.  We sought to elucidate the geometrical interpretation of the singular vectors, and argue that although PDDP and PDGP often perform comparably,  gap partitioning makes more sense intuitively.  We have explored many variants of these clustering algorithms in our research, and have suggested some simple implementations for future research.

One of the issues at large with the PDGP algorithm is the fringe effect. The tolerance $\tau$ effectively controls the balance of the cluster sizes, but it arbitrarily causes the splitting algorithm to ignore a certain percentage of the data projections. There may be other applications that will allow for the inclusion of this information, for instance outlier identification. Especially in cases where documents are extracted from the world wide web it is likely that some noisy documents which have no connection to the other documents will be extracted. However, just because a projected point looks like an outlier along the principal directions doesn't mean that it is truly an outlier in the context of the whole data set. Looking along secondary directions may provide more information to this effect.

\section{Acknowledgements} We would like to thank the National Science Foundation (NSF) for funding our REU program and making our work possible. We are also grateful for the UC Irvine Machine Learning Repository of data sets. We downloaded several of the above mentioned data sets from their website, including the Wisconsin Breast Cancer Database which is described in detail at http://archive.ics.uci.edu/ml/machine-learning-databases/breast-cancer-wisconsin/breast-cancer-wisconsin.names. Thanks to Dimitrios Zeimpekis and Efstratios Gallopoulos, the creators of the MATLAB Text to Matrix Generator (TMG), which was used to parse many of the documents sets used herein.

\bibliography{PDGP_SIAM.bib}
\bibliographystyle{siam}	

\end{document}